\newcommand\BibTeX{{\rmfamily B\kern-.05em \textsc{i\kern-.025em b}\kern-.08em
T\kern-.1667em\lower.7ex\hbox{E}\kern-.125emX}}
\newtheorem{theorem}{\textbf{Theorem}}[section]
\newtheorem{lemma}[theorem]{\textbf{Lemma}}
\newtheorem{remark}[theorem]{Remark}
\newtheorem{assumption}[theorem]{\textbf{Assumption}}
\title{\LARGE \bf
Guidance algorithm for smooth trajectory tracking of a fixed wing UAV flying in wind flows.
}
\author{Hector Garcia de Marina$^{1}$, Yuri A. Kapitanyuk$^{2}$, Murat Bronz$^{1}$, Gautier Hattenberger$^{1}$ and Ming Cao$^{2}$
\thanks{This work at ENAC has been supported by the Skycanner project (STAE foundation).}
\thanks{$^{1}$Hector Garcia de Marina, Murat Bronz and Gautier Hattenberger are with the University of Toulouse, Ecole nationale de l'aviation civile (ENAC), 31000 Toulouse, France {\tt\small hgdemarina@ieee.org}.}%
\thanks{$^{2}$Yuri Kapitanyiuk and Ming Cao are with the ENTEG institute at the University of Groningen, the Neherlands.
		{\tt\small \{i.kapitanyuk,m.cao\}@rug.nl}.}%
}
\begin{document}

\maketitle
\thispagestyle{empty}
\pagestyle{empty}

\begin{abstract}
	This paper presents an algorithm for solving the problem of tracking smooth curves by a fixed wing unmanned aerial vehicle travelling with a constant airspeed and under a constant wind disturbance. The algorithm is based on the idea of following a guiding vector field which is constructed from the implicit function that describes the desired (possibly time-varying) trajectory. The output of the algorithm can be directly expressed in terms of the bank angle of the UAV in order to achieve coordinated turns. Furthermore, the algorithm can be tuned offline such that physical constraints of the UAV, e.g. the maximum bank angle, will not be violated in a neighborhood of the desired trajectory. We provide the corresponding theoretical convergence analysis and performance results from actual flights.
\end{abstract}

\section{INTRODUCTION}
The usage of unmanned aerial vehicles (UAVs) in tasks, such as monitoring missions, surveillance or patrolling, has found broad applications. In order to accomplish such missions successfully, it is very often required to track or follow a predetermined path with high accuracy. For example, when performing the aerial mapping for a geographical area of interest, one needs to guarantee that the vehicle will fly over a prescribed trajectory by solving \emph{path-following control problem}. There is no unique approach for addressing this problem for fixed wing UAVs as it has been surveyed in \cite{sujit}. Most popular open-source UAV autopilots (such as Ardupilot \cite{ardu}, Pixhawk \cite{pixhawk} and Paparazzi \cite{papa}) use algorithms that are based on one of the following ideas: Tracking a time-varying reference point \cite{micaelli1993trajectory,soetanto2003adaptive,park2004new} (also known as \emph{carrot-chasing} or \emph{rabbit-chasing}); tracking a vector field \cite{nelson2007vector,frew2007lyapunov}; or minimizing some error signals involving the Euclidean distance to the desired path and other variables \cite{ratnoo2011adaptive}. These algorithms have been shown to be reliable and easy to implement with limited hardware resources; however, they have several limitations. Firstly, they are limited by the necessity of measuring the actual distance between the UAV and the given trajectory. In practice, this restricts the usage of such algorithms to straight lines and circles mostly \cite{nelson2007vector,frew2007lyapunov,ratnoo2011adaptive}; for more generic trajectories they can only provide local stability without information about the region of attraction \cite{park2004new}. Secondly, most of the models using these algorithms do not take into account the wind and they have to address this issue by employing extra controllers in cascade. An integral action can be considered in order to compensate such a disturbance for following a straight line. However, this approach usually fails for generic trajectories, even circles, since when the UAV is following such a path, the wind velocity vector is not fixed with respect to the \emph{body frame}. Techniques such as the estimation of the \emph{sideslip} angle are effective, but typically such results \cite{fossen2015line} are confined to Dubin's path, i.e., straight lines and circles. Thirdly, one can design a generic trajectory matching the physical constraints of the vehicle, e.g., maximum heading rate that determines the bank angle. However, the output from most of the above mentioned algorithms is a heading to be followed by the UAV. This setting point is forwarded usually to another controller in a cascaded fashion, and therefore making it difficult to assert that the physical constraints of the UAV are satisfied when the vehicle is not on the desired path.

The work presented in this paper is an extension of the algorithm given in \cite{YuriCS}. More precisely, the theoretical contribution of our work includes the technique of dealing with wind disturbance when following a generic \emph{sufficiently smooth} 2D path, whereas the practical contributions lie in the adaptation and integration of the algorithm to an actual fixed wing UAV. In particular, the presented algorithm is based on the idea of following a vector field \cite{nelson2007vector,frew2007lyapunov} that converges smoothly to the desired path, where the convergence is global if certain conditions are satisfied. Instead of considering the Euclidean distance, the notion of error is given by the implicit equation of the desired trajectory, making the tracking task much easier to be implemented. Furthermore, this approach makes it possible to deal with the problem of tracking time-varying trajectories or to define a 3D trajectory as the intersection of two surfaces \cite{Wang}. This last feature is desirable for certain practical problems. For example, for the sampling of the atmosphere by UAVs \cite{skyscanner} one can model the boundary of travelling clouds by a 3D \emph{slowly changing} paraboloid parallel to the ground. The desired trajectory for studying the surroundings of such a cloud can be given by the intersection of a plane with this paraboloid.

Note that the guidance vector field is not a novel concept at all and work based on it covering generic trajectories has been presented before \cite{robot}. However, the authors in \cite{robot} have only considered vehicles that can be modeled by fully actuated unit mass points. This kind of model is not suitable for actual fixed wings, where in an optimal or trimmed flight the air-speed must be constant. In our work, we can consider the case when the UAV is flying with a constant air-speed, where our algorithm provides the desired heading-rate for the vehicle. If we consider 2D ground parallel trajectories, then the heading rate can be directly translated to a coordinated turn, i.e. a bank angle that makes the UAV to turn without inducing any acceleration in the lateral axis of the vehicle.  These turns are desirable because they assist the attitude estimators \cite{de2012uav,condomines2015pi} based on the readings of accelerometers, utilizing the observation of the gravity acceleration vector.

The paper is organized as follows. The path following problem under a constant disturbance is set up in Section \ref{sec: prob}. We provide a solution to the problem based on the vector field in Section \ref{sec: ana} derived from Lyapunov stability analysis. We explain the implementation of the algorithm in an actual fixed-wing UAV and present its performance from actual flights in Section \ref{sec: fl}. We finally finish the paper with some conclusions in Section \ref{sec: con}.

The presented algorithm in this paper has been implemented in the popular open-source autopilot system Paparazzi \cite{papa} and it is ready to be used by the general public. 

\section{Problem definition}
\label{sec: prob}
We consider for the fixed wing UAV the following nonholonomic model in 2D
\begin{equation}
\begin{cases}
	\dot p &= sm(\psi) + w \\
	\dot\psi &= u,
\end{cases}
	\label{eq: pdyn}
\end{equation}
where $p\in\mathbb{R}^2$ is the position of the UAV with respect to some inertial navigation frame $\mathcal{O}_N$, $s\in\mathbb{R}^+$ is a constant that can be considered as the \emph{airspeed}, $m = \begin{bmatrix}\cos(\psi) & \sin(\psi)\end{bmatrix}^T$ with $\psi\in(-\pi, \pi]$ being the attitude \emph{yaw} angle, $w\in\mathbb{R}^2$ is a constant\footnote{For the sake of simplicity we consider that $w$ is constant, but we will see after the main result that this requirement can be indeed relaxed.} with respect to $\mathcal{O}_N$ representing the wind and $u$ is the control action that will make the UAV to turn. Here, the UAV is underactuated. We also notice that the course heading $\chi\in(-\pi, \pi]$, i.e. the direction the velocity vector $\dot p$ is pointing at, in general is different from the yaw angle $\psi$ because of the wind.

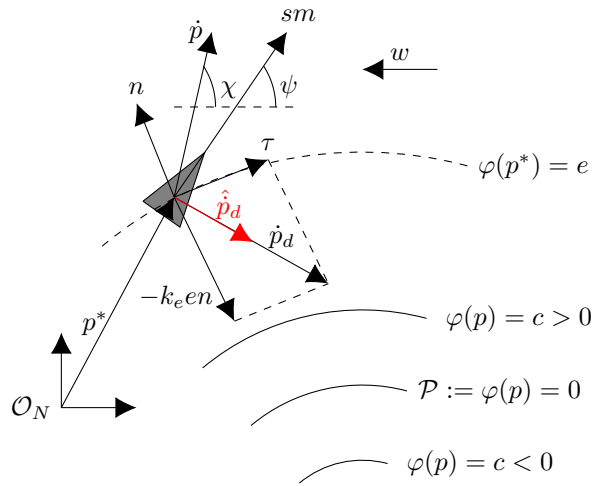
\begin{figure}
	\begin{tikzpicture}
	\draw[fill=gray,cm={cos(-55),-sin(-55),sin(-55),cos(-55),(1.15,0)}](0,1.3)--(0,0.7)--(1,1)--(0,1.3);
	\draw[dashed] (3,-4) ++(75:5.4) arc (75:130:5.4) node[at start, xshift=25]{$\varphi(p^*) = e$};
		\draw (3,-4) ++(75:1.3) arc (75:130:1.3) node[at start, xshift=35]{$\varphi(p) = c < 0$};
		\draw (3,-4) ++(75:2.3) arc (75:130:2.3) node[at start, xshift=35]{$\mathcal{P}:=\varphi(p) = 0$};
		\draw (3,-4) ++(75:3.3) arc (75:130:3.3) node[at start, xshift=35]{$\varphi(p) = c > 0$};;
	\draw[-{Latex[length=8, width=8]}] (-1, -2) -- (0, -2);
		\draw[-{Latex[length=8, width=8]}] (-1, -2) node[left]{$\mathcal{O}_N$} -- (-1, -1);
	\draw[-{Latex[length=8, width=8]}] (-1, -2) -- (0.5,0.8) node[pos=0.3, above]{$p^*$};
	\draw[-{Latex[length=8, width=8]}] (0.5,0.8) -- (1.75, 1.3) node[pos=1, above]{$\tau$};
	\draw[-{Latex[length=8, width=8]}] (0.5,0.8) -- (0, 2.05) node[pos=1, above]{$n$};
	\draw[-{Latex[length=8, width=8]}] (0.5,0.8) -- (1.3, -0.85) node[pos=0.8, left]{$-k_een$};
	\draw[-{Latex[length=8, width=8]}] (0.5,0.8) -- (3.05-0.5, 1.3-0.85-0.8) node[pos=0.7, above]{$\dot p_d$};
		\draw[-{Latex[length=8, width=8]}, color=red] (0.5,0.8) -- (3.05-1.5, 1.3-1.1) node[pos=0.7, above]{$\hat{\dot p}_d$};
	\draw[-{Latex[length=8, width=8]}] (0.5,0.8) -- (2,3) node[pos=1.1]{$sm$};
	\draw[dashed] (0.5, 2) -- (2.1, 2);
	\draw[dashed] (1.3, -0.85) -- (3.05-0.5, 1.3-0.85-0.8);
	\draw[dashed] (1.75, 1.3) -- (3.05-0.5, 1.3-0.85-0.8);
	\draw (1.85,2) arc (0:33:1) node[pos=-0, xshift=5, right, above]{$\psi$};
	\draw (1.05,2) arc (0:33:1) node[pos=-0, xshift=5, right, above]{$\chi$};
	\draw[-{Latex[length=8, width=8]}] (0.5,0.8) -- (1,3) node[pos=1, left]{$\dot p$};
	\draw[-{Latex[length=8, width=8]}] (4,2.5) -- (3,2.5) node[pos=0.5, above]{$w$};
\end{tikzpicture}
	\caption{The direction to be followed by the UAV at the point $p^*$ is given by $\hat{\dot p}_d$. The tangent and normal vectors $\tau$ and $n$ are calculated from $\nabla\varphi(p^*)$. The error \emph{distance} $e$ is calculated as $\varphi(p^*)$, and therefore it is different from the notion of the Euclidean distance in general.}
	\label{fig: ilus}
\end{figure}

Although a fixed wing can fly \emph{backwards} in $\mathcal{O}_N$, i.e. the course heading and the $yaw$ angle differ by $\pi$ radians, for the sake of simplicity in the analysis we consider the following realistic assumption.
\begin{assumption}
	\label{as: 1}
The constant airspeed $s$ is greater than the Euclidean norm of the wind vector $w$, i.e. $s > ||w||$.
\end{assumption}
In fact, it is quite straightforward to check that this assumption is necessary if one wants to reach a generic desired path from almost every initial position.

Consider the desired path $\mathcal{P} \in\mathbb{R}^2$ described by the following implicit equation
\begin{equation}
	\mathcal{P}:= \{p \, : \varphi(p) = 0\},
	\label{eq: P}
\end{equation}
where we assume that the function $\varphi : \mathbb{R}^2 \to \mathbb{R}$ belongs to the $C^2$ space and it is \emph{regular} in a neighborhood of $\mathcal{P}$, i.e.
\begin{equation}
	\nabla\varphi(p) \neq 0, \quad p\in \mathcal{N}_{\mathcal{P}},
	\label{eq: reg}
\end{equation}
where $\mathcal{N}_{\mathcal{P}} := \{p \, : \, |\varphi(p)| \leq c^* \}$ for a constant $c^*\in\mathbb{R}^+$.
The plane $\mathbb{R}^2$ can be covered by the following disjoint sets $\varphi(p) = c \in\mathbb{R}$, where each \emph{level set} is defined for a value, and in particular the \emph{zero level set} $\varphi(p) = 0$ corresponds uniquely to the desired path $\mathcal{P}$. Therefore, we can employ these level sets for the notion of \emph{error distance} between the UAV and $\mathcal{P}$, namely
\begin{equation}
	e(p) := \varphi(p) \in \mathbb{R}. \label{eq: e}
\end{equation}
Note that the error is signed and it differs from the notion of the Euclidean distance.

The main goal is to design a control action $u(p, \dot p, \psi)$ such that $e(t) \to 0$ as $t \to \infty$ and because of (\ref{eq: pdyn}) along with Assumption \ref{as: 1} the UAV will travel over $\mathcal{P}$ with $\dot p(t) \neq 0, \forall t\geq 0$. As will be seen, the control action requires to have available the following states from the UAV: its position and velocity with respect to $\mathcal{O}_N$, from for example a GPS signal and its yaw angle also with respect to $\mathcal{O}_N$, which can be obtained from a well calibrated compass in areas far away from the Earth's poles.
\section{Guidance law design and analysis}
\label{sec: ana}
Let us first introduce some notation. We define by $n(p) := \nabla \varphi(p)$ the normal vector to the curve corresponding to the level set $\varphi(p)$ and the tangent vector $\tau$ at the same point $p$ is given by the rotation
\begin{equation}
	\tau(p) = En(p), \quad E=\begin{bmatrix}0 & 1 \\ -1 & 0\end{bmatrix}. \nonumber
\end{equation}
Note that $E$ will determine in which direction $\mathcal{P}$ will be tracked.

The guidance controller is constructed in two steps. The first one is about constructing a guidance vector field such that once the UAV is tracking it, the vehicle will converge to $\mathcal{P}$. The second step deals with the task of steering the UAV in order to converge to such a guiding vector field.

\subsection{Vector field design}
The main idea to construct a guiding vector field pointing at $\mathcal{P}$ is based on decreasing the norm of (\ref{eq: e}). Consider the following Lyapunov function
\begin{equation}
	V_1(p) = \frac{1}{2}||e(p)||^2,
\end{equation}
whose time derivative along (\ref{eq: pdyn}) is given by
\begin{align}
	\frac{\mathrm{d}V_1}{\mathrm{dt}} = e n^T \dot p.
\end{align}
Consider the following desired velocity vector
\begin{equation}
	\dot p_d(p) := \tau(p) - k_e e(p)n(p),
	\label{eq: gvf}
\end{equation}
where $k_e\in\mathbb{R}^+$ is a gain that will tune how \emph{aggressive}
the vector field is. It is clear that
\begin{equation}
	e n^T \dot p_d = e n^T\tau - e^2k_e||n||^2 =  - e^2k_e||n||^2 \leq 0,
\end{equation}
is decreasing if and only if $e \neq 0$ for $p\in \mathcal{N}_\mathcal{P}$. Note that since $\tau$ is perpendicular to $n$, once the UAV is over $\mathcal{P}$ then the vehicle will track the direction given by only $\tau$, i.e. the tangent to the desired path. Therefore, we define (\ref{eq: gvf}) as the \emph{guidance vector field} to be followed. In particular, the UAV has to track the orientation of the unit vector calculated from (\ref{eq: gvf}), i.e. the desired course heading $\chi_d(p)$.

\begin{remark}
It is now clear the role of Assumption \ref{as: 1}. If $s < ||w||$ then it is not possible to travel in the direction against the wind, and therefore being impossible to track the guidance vector field (\ref{eq: gvf}) in general.
\end{remark}

Let us define $\hat{x}:= \frac{x}{||x||}$ for $x\in\mathbb{R}^n$. Now we are going to calculate what the desired course heading rate $\dot\chi_d(\dot p, p)$ 
is once the UAV is tracking correctly the guidance vector field given in (\ref{eq: gvf}), i.e. what the course heading rate is such that the set ${\mathcal{G}} :=  \{\dot p \, : \, \hat{\dot p} = \hat{\dot p}_d\}$ is invariant.

The time derivative of the unit vector defining the desired heading is given by
\begin{align}
	\frac{\mathrm{d}\hat{\dot p}_d}{\mathrm{dt}} &= (I - \hat{\dot p}_d \hat{\dot p}_d^T)\frac{\ddot p_d}{||\dot p_d||} = (\hat{\dot p}_d^TE)^T(\hat{\dot p}_d^TE)\frac{\ddot p_d}{||\dot p_d||} \nonumber \\
&= -E\hat{\dot p}_d \, \hat{\dot p}_d^T E \frac{\ddot p_d}{||\dot p_d||},
\label{eq: ph_d}
\end{align}
where $I$ is the identity matrix with the appropriate dimensions and from (\ref{eq: gvf}) we derive
\begin{align}
	\ddot p_d &= \frac{\mathrm{d}}{\mathrm{dt}}(E-k_ee)n \nonumber \\
	&= (E -k_ee)H(\varphi(p))\dot p - k_en^T\dot p n,
	\label{eq: pd_dd}
\end{align}
where $H(\cdot)$ is the Hessian operator, establishing then the condition of $\varphi(p)$ being $C^2$. Physically it means that the UAV in order to track $\mathcal{P}$ needs to know how the curvature of the desired trajectory evolves. 

Now we derive the expression of the desired heading rate $\dot\chi_d$ once $\dot p \in\mathcal{G}$. Since $||\hat {\dot p}_d||^2=1$, we have that\begin{equation}
	\frac{1}{2}\frac{\mathrm{d}||\hat {\dot p}_d||^2}{\mathrm{dt}} = \hat {\dot p}_d^T \frac{\mathrm{d}\hat{\dot p}_d}{\mathrm{dt}} = 0,
\end{equation}
hence the infinitesimal rotation of $\hat{\dot p}_d$ can be described by
\begin{equation}
	\frac{\mathrm{d}\hat {\dot p}_d}{\mathrm{dt}} = -\dot\chi_d E\hat{\dot p}_d,
	\label{eq: w}
\end{equation}
which is perpendicular to $\hat {\dot p}_d$ and the angular speed is determined by $\dot\chi_d$. Working out further (\ref{eq: w}) we identify that
\begin{align}
	E^T \frac{\mathrm{d}\hat {\dot p}_d}{\mathrm{dt}} &= -\dot\chi_d \hat {\dot p}_d \nonumber \\
	\hat {\dot p}_d^T E^T \frac{\mathrm{d}\hat{\dot p}_d}{\mathrm{dt}} &= -\dot\chi_d \nonumber \\
	\dot\chi_d &= \left(\frac{\mathrm{d}\hat {\dot p}_d}{\mathrm{dt}}\right)^T E \,\hat{\dot p}_d,
	\label{eq: ud}
\end{align}
therefore by tracking back (\ref{eq: pd_dd}), (\ref{eq: ph_d}) and (\ref{eq: gvf}) for the angular velocity (\ref{eq: ud}) we notice that the desired course heading rate $\dot\chi_d$ in order to keep $\mathcal{G}$ invariant can be computed by only sensing the ground velocity $\dot p$ and position $p$ of the UAV. We summarize such observations into the following Lemma.
\begin{lemma}
	\label{lem: 1}
	The set ${\mathcal{G}} :=  \{\dot p \, : \, \hat{\dot p} = \hat{\dot p}_d\}$, with $\dot p_d$ being the guidance vector field in (\ref{eq: gvf}), is invariant for the following course heading rate
\begin{align}
	&\dot\chi(p,\dot p) = \nonumber \\
	&-\left(E\hat{\dot p}_d \, \hat{\dot p}_d^T E \left(\left(E -k_ee\right)H(\varphi(p))\dot p - k_en^T\dot p n\right)\right)^T  E\frac{\dot p_d}{||\dot p_d||^2}, \label{eq: wlem}
\end{align}
that only depends on $\dot p$ and $p$.
\end{lemma}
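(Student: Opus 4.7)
The plan is to verify that Lemma \ref{lem: 1} is a direct consequence of the computations already performed in the text, once the notion of ``invariance'' is unpacked. Invariance of $\mathcal{G}$ under the dynamics (\ref{eq: pdyn}) means that whenever $\hat{\dot p}(t) = \hat{\dot p}_d\bigl(p(t)\bigr)$, one also has the equality of time derivatives $\tfrac{\mathrm{d}\hat{\dot p}}{\mathrm{dt}} = \tfrac{\mathrm{d}\hat{\dot p}_d}{\mathrm{dt}}$. First I would show that the left-hand side equals $-\dot\chi\, E\hat{\dot p}$: since $\dot p$ has magnitude $\|\dot p\|$ and direction parameterised on the unit circle by the course angle $\chi$, repeating the computation that produced (\ref{eq: w}) with $\hat{\dot p}$ in place of $\hat{\dot p}_d$ yields $\tfrac{\mathrm{d}\hat{\dot p}}{\mathrm{dt}} = -\dot\chi\, E\hat{\dot p}$. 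Therefore invariance of $\mathcal{G}$ is equivalent to the scalar identity obtained by left-multiplying by $\hat{\dot p}_d^T E^T$, namely $\dot\chi = \bigl(\tfrac{\mathrm{d}\hat{\dot p}_d}{\mathrm{dt}}\bigr)^T E\,\hat{\dot p}_d$, which is exactly the right-hand side of (\ref{eq: ud}) specialised to trajectories in $\mathcal{G}$.

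Second I would substitute the already-derived expressions into this identity. From (\ref{eq: ph_d}) we have
\begin{equation*}
\frac{\mathrm{d}\hat{\dot p}_d}{\mathrm{dt}} = -E\hat{\dot p}_d\,\hat{\dot p}_d^T E\,\frac{\ddot p_d}{\|\dot p_d\|},
\end{equation*}
and from (\ref{eq: pd_dd}), using $\dot e = n^T\dot p$ together with $\tfrac{\mathrm{d}n}{\mathrm{dt}} = H(\varphi(p))\dot p$,
\begin{equation*}
\ddot p_d = (E - k_e e)\,H(\varphi(p))\,\dot p - k_e\,n^T\dot p\,n.
\end{equation*}
Plugging these into $\dot\chi = \bigl(\tfrac{\mathrm{d}\hat{\dot p}_d}{\mathrm{dt}}\bigr)^T E\,\hat{\dot p}_d$ and replacing $\hat{\dot p}_d / \|\dot p_d\|$ by $\dot p_d / \|\dot p_d\|^2$ on the outer factor reproduces expression (\ref{eq: wlem}) verbatim.

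Third I would observe that the resulting formula is indeed a function of $(p,\dot p)$ only: $\varphi$, $n$, $e$ and $H(\varphi)$ are functions of $p$; the vector $\dot p_d$, and hence $\hat{\dot p}_d$ and $\|\dot p_d\|$, is determined by $p$ through (\ref{eq: gvf}); and $\dot p$ appears only through the products $H(\varphi(p))\dot p$ and $n^T\dot p$. In particular, no dependence on the wind $w$ or on $\psi$ is introduced, which is crucial for the control design that follows.

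The only technically delicate point, and the place I would be most careful, is the rewriting of the orthogonal projector used in (\ref{eq: ph_d}): in $\mathbb{R}^2$ one has the identity $I - \hat{x}\hat{x}^T = (E\hat{x})(E\hat{x})^T$ for any unit vector $\hat{x}$, and the sign of the final expression depends on using $E^T = -E$ consistently. Once this identity is verified and one keeps track of where the normalising factor $\|\dot p_d\|$ versus $\|\dot p_d\|^2$ enters, the lemma follows by the substitution above; no further Lyapunov or invariance-set machinery is required.
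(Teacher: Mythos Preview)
Your proposal is correct and follows essentially the same route as the paper's own proof: both argue that invariance of $\mathcal{G}$ amounts to matching the time derivative of $\hat{\dot p}$ with that of $\hat{\dot p}_d$, leading to the scalar condition (\ref{eq: ud}), and then obtain (\ref{eq: wlem}) by substituting (\ref{eq: ph_d}) and (\ref{eq: pd_dd}) into (\ref{eq: ud}). Your write-up is in fact more explicit than the paper's, spelling out why $\tfrac{\mathrm{d}\hat{\dot p}}{\mathrm{dt}} = -\dot\chi\, E\hat{\dot p}$, why the projector identity $I-\hat x\hat x^T=(E\hat x)(E\hat x)^T$ holds in $\mathbb{R}^2$, and why the final expression depends only on $(p,\dot p)$; the paper treats these as immediate from inspection.
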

\begin{proof}
	By direct inspection of (\ref{eq: wlem}) it is clear that it only depends on $\dot p$ and $p$. Consider that $p\in\mathcal{G}$ with $\dot p_d$ as in (\ref{eq: gvf}). Then in order to keep $\mathcal{G}$ invariant we have to satisfy the right hand side of (\ref{eq: ph_d}) for the time derivative of $\hat{\dot p}$. The angular velocity (\ref{eq: wlem}) is the substitution of (\ref{eq: pd_dd}) and (\ref{eq: ph_d}) into (\ref{eq: ud}), which determines the course heading rate $\dot\chi$ in order to keep $\mathcal{G}$ invariant.
\end{proof}
\begin{remark}
	Notice that the yaw rate $\dot\psi$ and the course heading rate $\dot\chi$ are different concepts and quantities. However, according to (\ref{eq: pdyn}) by actuating over the yaw, we are also actuating over the course heading. The aim of the next section is to show how to design $u$ in (\ref{eq: pdyn}) such that the UAV is following the appropriated course heading rate.
\end{remark}

\subsection{Converging to the guidance vector field}
Now we are going to present how to make the UAV to converge to the guidance vector field defined in (\ref{eq: gvf}). The ground velocity $\dot p$ can be trivially decomposed as $\dot p = ||\dot p|| \hat{\dot p}$. Now consider the following Lyapunov function
\begin{equation}
	V_2(\eta) = 1 - \hat {\dot p}^T \hat{\dot p}_d,
	\label{eq: V2}
\end{equation}
where $\eta\in(-\pi, \pi]$ is the angle between the two unit vectors $\hat {\dot p}$ and $\hat{\dot p}_d$. It is clear that $V_2(\eta) = 0 \iff \eta = 0$, i.e. the ground velocity of the UAV is aligned with the vector field. Now we are ready for our main result.
\begin{theorem}
	\label{th: 1}
	Consider a desired trajectory $\mathcal{P}$ as in (\ref{eq: P}) such that $\varphi(p)$ is $C^2$ and satisfies (\ref{eq: reg}). Assume that the UAV is modeled by (\ref{eq: pdyn}) under Assumption \ref{as: 1} and the vehicle can measure its ground velocity $\dot p$, position $p$ and yaw angle $\psi$ with respect to some navigation frame $\mathcal{O}_N$. Then the control action 
\begin{equation}
	u(\dot p, p, \psi) = \dot\psi = \frac{||\dot p||}{s\cos{\beta}}\left (\dot\chi_d(\dot p, p)
	+ k_d \hat{\dot p}^TE\hat{\dot p}_d\right), \label{eq: u}
\end{equation}
	where $\beta = \operatorname{arccos}\left({\hat{\dot p}^T m(\psi)}\right)$ is the \emph{sideslip} angle, $k_d\in\mathbb{R}^+$ determines how fast the UAV converges to the guidance vector field and $\dot\chi_d$ is given in Lemma \ref{lem: 1}, guides the UAV (at least locally) to converge asymptotically to travel over $\mathcal{P}$ for all the initial conditions $p(0)\in \mathcal{N}_c \subset \mathcal{N}_{\mathcal{P}}$, where $\mathcal{N}_c$ is as $\mathcal{N}_{\mathcal{P}}$ in (\ref{eq: reg}) but with a constant $0 \leq c < c^*$.
\end{theorem}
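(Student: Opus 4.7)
First, the plan is to reduce the yaw-rate input $u = \dot\psi$ to an effective course-heading rate $\dot\chi$, so that the entire analysis can be carried out in the language of Lemma \ref{lem: 1}. Differentiating $\dot p = s\,m(\psi) + w$ with respect to time yields $\ddot p = -s\dot\psi\,E\,m(\psi)$, while the decomposition $\dot p = \|\dot p\|\,\hat{\dot p}$ together with the planar identity $\tfrac{d}{dt}\hat{\dot p} = -\dot\chi\,E\,\hat{\dot p}$ (same structure as (\ref{eq: w})) gives a second expression for $\ddot p$. Projecting both onto $E\hat{\dot p}$ yields the scalar relation $\dot\chi = \frac{s\cos\beta}{\|\dot p\|}\dot\psi$, and substituting the proposed control (\ref{eq: u}) collapses the closed loop to $\dot\chi = \dot\chi_d(p,\dot p) + k_d\,\hat{\dot p}^T E\,\hat{\dot p}_d$. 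Assumption \ref{as: 1} gives $\|\dot p\| \geq s - \|w\| > 0$, so the factor $\|\dot p\|/(s\cos\beta)$ is well defined as long as $\beta$ stays away from $\pm\pi/2$, which holds locally.

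Second, I would prove the heading alignment via the Lyapunov candidate $V_2$ in (\ref{eq: V2}). Using that both $\hat{\dot p}$ and $\hat{\dot p}_d$ evolve as planar unit vectors with angular rates $\dot\chi$ and $\dot\chi_d$ respectively (the latter being exactly the object identified in Lemma \ref{lem: 1} and well defined whether or not $\dot p \in \mathcal{G}$), a short computation gives
\[
\dot V_2 = (\dot\chi_d - \dot\chi)\,\hat{\dot p}^T E\,\hat{\dot p}_d = -k_d\,\bigl(\hat{\dot p}^T E\,\hat{\dot p}_d\bigr)^2 \leq 0.
\]
Writing $\hat{\dot p}^T E\,\hat{\dot p}_d = \pm\sin\eta$, the derivative vanishes only at the aligned equilibrium $\eta = 0$ (i.e.\ on the invariant set $\mathcal{G}$) and at the anti-aligned equilibrium $\eta = \pi$; the latter is unstable for the heading dynamics. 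LaSalle's invariance principle on a neighborhood of $\eta = 0$ then yields local exponential alignment to $\mathcal{G}$, with the rate tunable through $k_d$.

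Third and finally, I would close the argument with a cascade step for the path error. On $\mathcal{G}$ we have $\dot p = \|\dot p\|\,\hat{\dot p}_d$, and substituting (\ref{eq: gvf}) into $\dot V_1 = e\,n^T\dot p$ produces $\dot V_1\big|_{\mathcal{G}} = -k_e\,\|\dot p\|\,\|n\|^2\,e^2/\|\dot p_d\|$, which is negative definite on $\mathcal{N}_{\mathcal{P}}$ thanks to the regularity condition (\ref{eq: reg}) together with $\|\dot p\|\geq s-\|w\|$. Off $\mathcal{G}$ the same derivative picks up a perturbation proportional to $n^T(\hat{\dot p}-\hat{\dot p}_d)$ that vanishes with $V_2$, so a standard cascade argument, or equivalently a composite Lyapunov function $V_1 + \lambda V_2$ with $\lambda$ chosen large enough that the cross term is dominated by the two diagonal negative-definite terms, delivers local asymptotic convergence of $(e,\eta)$ to $(0,0)$. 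The restriction to $\mathcal{N}_c \subset \mathcal{N}_{\mathcal{P}}$ with $c<c^*$ then follows from a sub-level-set forward-invariance argument for this composite Lyapunov function. The main obstacle I anticipate is precisely this last step: quantifying the transient overshoot of $e$ while the heading is still misaligned, so as to guarantee that the trajectory never leaves $\mathcal{N}_{\mathcal{P}}$ and hence that (\ref{eq: reg}) remains valid throughout; combined with the presence of the antipodal heading equilibrium $\eta=\pi$, this is what forces the theorem to be stated as a local (rather than global) convergence result.
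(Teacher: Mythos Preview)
Your proposal is correct and follows essentially the same route as the paper: both hinge on the Lyapunov candidate $V_2 = 1 - \hat{\dot p}^T\hat{\dot p}_d$, arrive at $\dot V_2 = -k_d(\hat{\dot p}^T E\hat{\dot p}_d)^2$, and finish with LaSalle plus the observation that alignment to $\mathcal{G}$ forces $e\to 0$ via the $V_1$ analysis. Your treatment is in fact more careful than the paper's in two places---you explicitly flag the unstable antipodal equilibrium $\eta=\pi$ and you spell out the cascade/composite-Lyapunov step for $e$, whereas the paper simply asserts that $\hat{\dot p}\to\mathcal{G}$ ``implies'' $p\to\mathcal{P}$ and handles forward invariance of $\mathcal{N}_{\mathcal{P}}$ by requiring $k_d$ large enough; one small point is that under Assumption~\ref{as: 1} the bound $\cos\beta>0$ actually holds globally (since $\dot p\cdot m = s + w^Tm \geq s-\|w\|>0$), not just locally as you wrote.
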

\begin{proof}
	We need to show that the unit velocity vector $\hat {\dot p}$ of the UAV converges asymptotically to the unit velocity vector $\hat {\dot p}_d$ given by the vector field (\ref{eq: gvf}), i.e. $\chi(p, \dot p, \psi) - \chi_d(p, \dot p, \psi) \to 0$ as $t \to \infty$. Consider that $p(0)\in\mathcal{N}_c$ and take the time derivative of the Lyapunov function (\ref{eq: V2})
\begin{align}
	\frac{\mathrm{d}V_2}{\mathrm{dt}} &= -\hat{\dot p}_d^T \left(\frac{\mathrm{d}}{\mathrm{dt}} \hat {\dot p}\right) - \hat {\dot p}^T \left(\frac{\mathrm{d}}{\mathrm{dt}}\hat{\dot p}_d\right). \label{eq: V2d}
\end{align}
We now work out the first time derivative term on the right hand side of (\ref{eq: V2d}) since the second term has been calculated in (\ref{eq: w}):
\begin{align}
\frac{\mathrm{d}}{\mathrm{dt}} \hat {\dot p} &= -\frac{1}{||\dot p||}E
	\hat{\dot p}\hat{\dot p}^TE\frac{\mathrm{d}}{\mathrm{dt}}\dot p\nonumber \\
	&= -\frac{s}{||\dot p||}E\hat{\dot p}\hat{\dot p}^TE \frac{\mathrm{d}}{\mathrm{dt}} m(\psi) \nonumber \\
	&= \frac{s}{||\dot p||}E\hat{\dot p}\hat{\dot p}^TE \dot\psi E m(\psi) \nonumber \\
	&= -\frac{s\dot\psi}{||\dot p||}\left(\hat{\dot p}^T m(\psi)\right)E \hat{\dot p}. \label{eq: tpm}
\end{align}
	We now substitute (\ref{eq: tpm}) and (\ref{eq: w}) into (\ref{eq: V2d}) and we arrive at
\begin{align}
	\frac{\mathrm{d}V_2}{\mathrm{dt}} &= \frac{s\dot\psi}{||\dot p||}\left(\hat{\dot p}^T m(\psi)\right)\hat{\dot p}_d^TE \hat{\dot p} - \dot\chi_d\hat{\dot p}_d^T E\hat{\dot p} \nonumber \\
	&= \left(\frac{s\dot\psi}{||\dot p||}\cos{\beta} - \dot\chi_d \right)\hat{\dot p}_d^T E\hat{\dot p}.
\end{align}
By choosing
\begin{equation}
	u(\dot p, p, \psi) = \dot\psi = \frac{||\dot p||}{s\cos{\beta}}\left (\dot\chi_d + k_d \hat{\dot p}^TE\hat{\dot p}_d\right),
	\label{eq: controlac}
\end{equation}
we have that
\begin{equation}
	\frac{\mathrm{d}V_2}{\mathrm{dt}} = -k_d (\hat{\dot p}^TE\hat{\dot p}_d)^2 \leq 0,
	\label{eq: bajo}
\end{equation}
	which is non-increasing in the (compact) set $\mathcal{N}_{\mathcal{P}}$. The constant $k_d$ has to be big enough such that the UAV does not leave the set $\mathcal{N}_{\mathcal{P}}$ once the vehicle starts in $\mathcal{N}_c$, i.e. we need to align the UAV with the vector field \emph{as soon as possible} in order to satisfy (\ref{eq: reg}) for all $t\geq 0$. The (worst case) calculation of $k_d$ is a strictly geometrical and kinematic task that depends on $s$, $w$, $\psi$, $c$ and $c^*$. Because of (\ref{eq: bajo}), the condition (\ref{eq: reg}) and Assumption (\ref{as: 1}), by invoking the LaSalle's invariance principle, we conclude that $\chi\left(p(t), \dot p(t), \psi(t)\right) - \chi_d\left(p(t), \dot p(t), \psi(t)\right) \to 0$, or equivalently $ \hat{\dot p}(t) \to \mathcal{G}$, as $t \to \infty$, implying that $p(t) \to \mathcal{P}$ as $t \to \infty$ with the UAV travelling over the desired trajectory $\mathcal{P}$ with $\dot p(t)\neq 0, \forall t$.
\end{proof}
\begin{remark}
	Note that for some trajectories, such as straight lines, we have that $\mathcal{N}_{\mathcal{P}} = \mathbb{R}^2$, and therefore the convergence in Theorem \ref{th: 1} is global. For other trajectories where the set of critical points $\nabla\phi(p) = 0$ is bounded and does not intercept $\mathcal{P}$, e.g. the center of an ellipse, a more precise condition for $k_d$ can be given in order to be more specific about $\mathcal{N}_{\mathcal{P}}$ and $\mathcal{N}_c$ \cite{YuriCS}.
\end{remark}
\begin{remark}
We also note that if the yaw angle $\psi$ is not accessible or reliable, e.g. if the UAV is close to the Earth's poles, one can replace it by measuring the sideslip angle $\beta$ and still employing the results from Theorem \ref{th: 1}.
\end{remark}
\begin{remark}
	One can check that a non-constant positive airspeed $s(t)$, but satisfying Assumption \ref{as: 1}, will not change the convergence results in Theorem \ref{th: 1}. In fact the condition of having a constant wind $w$ can also be relaxed by just considering that $s(t) > \operatorname{sup}\{||w(t)||\}, \forall t \geq 0$, i.e. we exclude situations where the UAV stops or flies \emph{backwards} with respect to the ground.
\end{remark}

\section{Implementation and flight performance}
\label{sec: fl}
In this Section we are going to discuss several practical issues in order to implement the guidance vector field (\ref{eq: gvf}) to an actual fixed wing in the opensource project Paparazzi \cite{papa}. We conclude the section by showing the performance of actual flights employing the results\footnote{For more details about the implementation of the algorithm and further experimental results we refer to the website https://wiki.paparazziuav.org/wiki/Module/guidance\_vector\_field} in Theorem \ref{th: 1}.

\subsection{Gain tuning}
We are going to show that the gains $k_e$ in the guidance vector field (\ref{eq: gvf}) and $k_d$ in the control action (\ref{eq: u}) can be designed in order to satisfy the physical constraint given by the maximum bank angle $\phi^*$ of the UAV. 

If we consider that the flight path angle is zero, i.e. the UAV is keeping its altitude, and $s >> ||w||$, i.e. we have a small sideslip $\beta$, the yaw rate $\dot\psi$ can be well approximated by the following expression \cite{stevens2015aircraft}
\begin{equation}
	\dot\psi = \frac{g\tan\phi \cos\theta}{s},
	\label{eq: turn}
\end{equation}
where $g$ is the gravity acceleration and $\phi$ and $\theta$ are the roll and pitch attitude angles, respectively, of the UAV. The expression (\ref{eq: turn}) is also known as the condition for a \emph{coordinated turn}. In such a case the UAV does not experience any acceleration in its lateral body axes. This is desirable since many of the attitude estimation algorithms employed in projects like Paparazzi, such as \cite{de2012uav,condomines2015pi}, are based on the observation of gravity. We also consider that in a trimmed flight, the pitch angle $\theta$ remains constant and usually is close to zero.

From (\ref{eq: turn}) it is clear that we have to satisfy
\begin{equation}
	|\phi^*| \leq \arctan \frac{s\, u(\dot p, p, \psi)}{g \cos\theta}.
	\label{eq: constr}
\end{equation}
If one is interested in visiting a certain area with the UAV, then it should restrict the desired trajectory $\mathcal{P}$ such that (\ref{eq: constr}) is satisfied under the worst case condition for $||\dot p||$ (determined by the expected wind speed) in (\ref{eq: u}) with $\eta = 0$. Our algorithm covers the popular splines \cite{gill2015spline} for trajectory generation in order to satisfy constraints such as (\ref{eq: constr}).

A conservative value for $k_d$ in order to satisfy (\ref{eq: constr}) can be calculated by considering $\hat{\dot p}^TE\hat{\dot p}_d = \pm 1$ in (\ref{eq: controlac}). However, one needs to have in mind that $k_d$ should be sufficiently big (in the transient of the UAV converging to the guidance vector field) according to Theorem \ref{th: 1} for keeping $p(t)\in\mathcal{N}_{\mathcal{P}}$. Finally, $k_e$ directly influences how smooth (compromised by how fast) the convergence of the guidance vector field to $\mathcal{P}$ is and it can be chosen arbitrarily small. Therefore, one can calculate beforehand the values of $k_e$ and $k_d$ such that (\ref{eq: constr}) is satisfied in $\mathcal{N}_{\mathcal{P}}$.

\subsection{Experimental platform}
We have tested the validity of Theorem \ref{th: 1} in our fixed wing UAV shown in Figure \ref{fig: jump} called \emph{Jumper}. It is about $450$grams of weight, $70$cm of wingspan, actuated by two elevons and one motor. The electronics include a battery that allows about $30$ minutes of autonomoy at the nominal flight, which corresponds to about an airspeed of $s = 11$m/s. The vehicle has a high maneuverability and we have set in the autopilot a saturation of $|\phi^*| = 45$ degrees to the roll angle. The chosen board for the autopilot is the Apogee \cite{papa}, supported by Paparazzi, which includes the usual sensors of three axis gyros, accelerometers, magnetometers and a GPS. Therefore we can measure $p, \dot p$ and $\psi$ as required in Theorem \ref{th: 1}. The microcontroller on board is a STMicroelectronics STM32F4. Although there is a logging system on board, the vehicle counts with a serial radio link in order to monitor its status from the ground. The algorithm in Theorem \ref{th: 1} has been programmed as a (guidance) module in Paparazzi and it can be combined or integrated easily with other modules in the system. In particular, we have set the periodic frequency of the \emph{guidance vector field module} to $60$Hz. The source code can be checked online at the Paparazzi repository, where the implementation of the guidance algorithm is independent of the trajectory. This allows other users to specify their own trajectories by only defining the implicit $\varphi(p), \nabla\varphi(p)$ and $H(\varphi(p))$ in C-code. In addition, Paparazzi allows easily to change parameters on flight, and different values for $k_e$ and $k_d$ can be set on-the-fly from the ground station.

\begin{figure}
\centering
\includegraphics[width=1\columnwidth]{./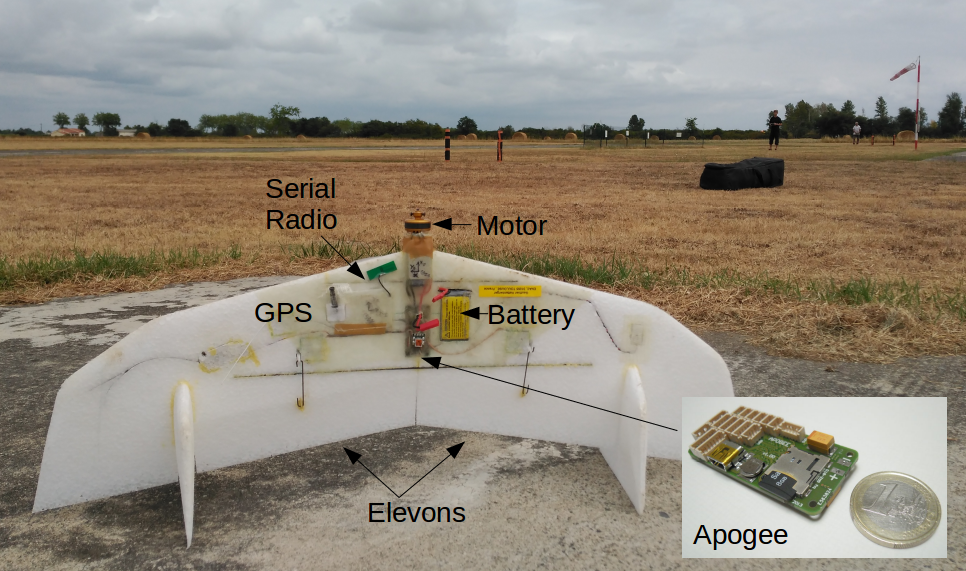}
	\caption{UAV Jumper at the aero model club of Eole at Muret (Toulouse) before the experiment's flight in a pre-storm day.}
\label{fig: jump}
\end{figure}

\subsection{Flight experiments}
The flights have taken place at the aero model club of Eole at Muret, close to the city of Toulouse in France. We performed the flights on the 18th of August, 2016 between the 14:00 and 18:00 hours local time. The wind according to the weather service of MeteoFrance was about $5$m/s blowing from the east with gusts of about $10$m/s. Therefore Jumper with a nominal airspeed of $s=11$m/s satisfies Assumption \ref{as: 1}.

As a benchmark we consider different ellipses as desired $\mathcal{P}$, namely
\begin{align}
	\varphi(p) &= \left(\frac{(p_x - h_x)\cos\alpha - (p_y - h_y)\sin\alpha}{a}\right)^2 \nonumber \\ &+ \left(\frac{(p_x - h_x)\sin\alpha + (p_y - h_y)\cos\alpha}{b}\right)^2 - 1,
\label{eq: elli}
\end{align}
where $h = \begin{bmatrix}h_x & h_y\end{bmatrix}^T$ is the center of the ellipse with respect to $\mathcal{O}_N$, $\alpha$ is the rotation angle of the ellipse with respect to the horizontal axis of $\mathcal{O}_N$ and $a$ and $b$ are the lengths of the ellipse's axis. Note that only for $p = h$ we have that $\nabla\varphi(p) = 0$, and therefore (\ref{eq: elli}) satisfies (\ref{eq: reg}) for some $c^* > 0$.

The autopilot allows to have a fully automated flight, from the take-off until the landing. We show in Figure \ref{fig: xy} one of the tested ellipses corresponding to $a = 50, b = 75$ meters and $\alpha = -15$ degrees, together with the described Jumper's trajectory. We have designed $k_e = 0.4$ and $k_d = 1$ such that $\phi$ (without wind) is less than $45$ degrees for $c^* \leq 6$ in $\mathcal{N}_{\mathcal{P}}$. We describe the experiment in more detail in Figures \ref{fig: xy}-\ref{fig: roll}.

\begin{figure}[tb]
\centering
\includegraphics[width=0.92\columnwidth]{./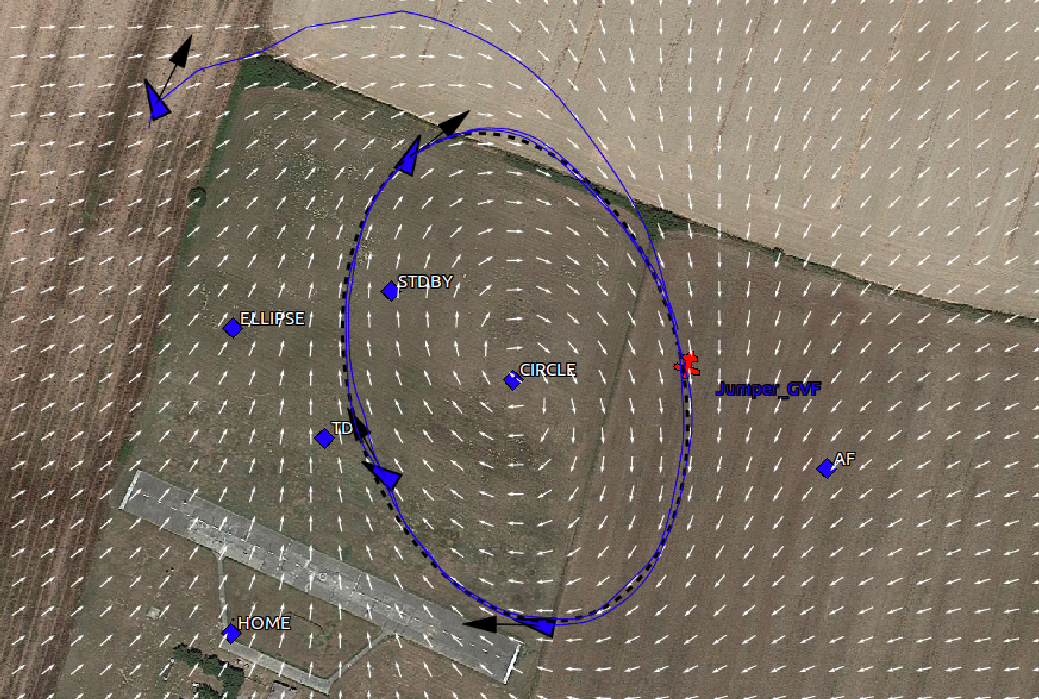}
	\caption{Screenshot from the Paparazzi's ground control station. From the flight log we have drawn on top several positions of the UAV (blue triangles). The yaw $\psi$ is represented by the orientation of the triangle and the black arrows stand for the course heading. The desired ellipse has been marked with a black dashed line. We have marked the last position of the UAV, after two turns to the ellipse, in red color. The vector field is represented in white color and the blue line is the actual trajectory.}
\label{fig: xy}
\end{figure}

\begin{figure}
\centering
\includegraphics[width=1\columnwidth]{./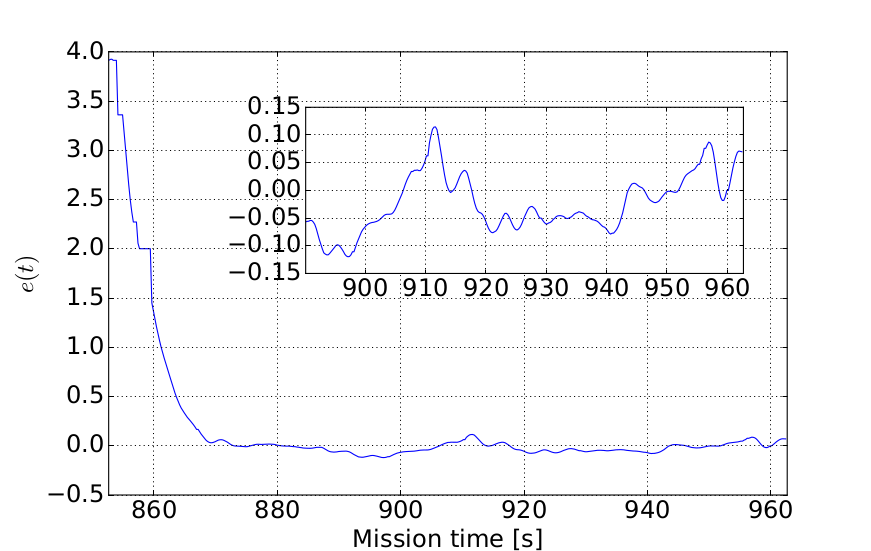}
	\caption{Evolution of the adimensional error signal calculated as in (\ref{eq: elli}). Note that this signal is different from the notion of Euclidean distance.}
\label{fig: e}
\end{figure}

\begin{figure}
\centering
\includegraphics[width=1\columnwidth]{./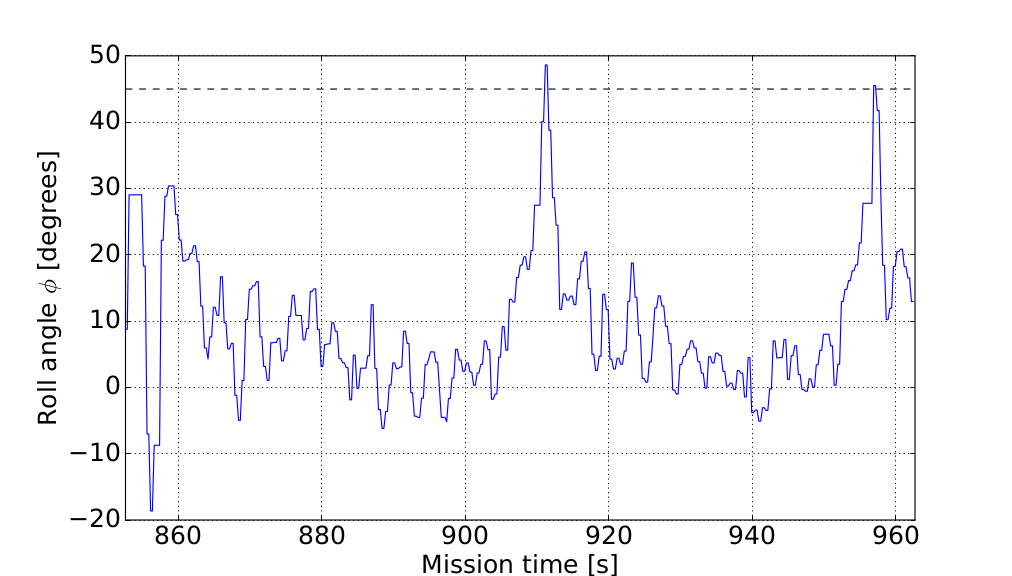}
\caption{Roll angle signal. The black dashed line is the constraint $\phi^*$. 
	It is passed when the UAV has tail wind (maximum ground speed) and crosses
	the minor axis of the ellipse (maximum turning rate for $\mathcal{P}$). The tuning of the gains $k_e$ and $k_d$ was done considering almost no wind, but in the experiment the wind is almost the $50\%$ of the airspeed.}
\label{fig: roll}
\end{figure}


\section{Conclusions}
\label{sec: con}
This paper has presented an algorithm for making fixed wing UAVs following smooth trajectories under the presence of wind. The guidance strategy is based on tracking a vector field generated from the implicit form of the desired trajectory.
The simplicity of this algorithm allows its implementation in small embedded systems as the Apogee autopilot. The algorithm has been implemented in Paparazzi as an independent module and does not depend on the desired trajectory, allowing other users to employ the algorithm by just codifying the implicit equation of the trajectory, along with its gradient and Hessian. 

We are currently extending and testing the results of this paper for formation flying control by employing the different level sets of a desired trajectory as an input for consensus algorithms and combining the recent findings in \cite{de2016distributed,garcia2015controlling} for controlling rigid formations.

\bibliographystyle{IEEEtran}
\bibliography{hector_ref.bib}

\end{document}